\newcommand{\Rule}{\tau}
\newcommand{\softmatch}{\textit{softmatch}}
\renewcommand{\emph}[1]{\textit{#1}}
\renewcommand{\emph}[1]{\textit{#1}}
\renewcommand{\emph}[1]{\textit{#1}}
\newcommand{\nsqcap}{\mathrel{\ooalign{$\sqcap$\cr\hidewidth$/$\hidewidth}}}
\newif\ifshowannotations
  \newenvironment{annotated}{}{}
\begin{document}
\title{Evaluation and Comparison Semantics for ODRL}

\author{Jaime Osvaldo Salas
\orcidlink{0000-0002-9353-8955}
\and
Paolo Pareti
\orcidlink{0000-0002-2502-0011}
\and
Semih Yumuşak
\orcidlink{0000-0002-8878-4991}
\and
Soulmaz Gheisari
\orcidlink{0000-0001-8974-2841}
\and
Luis-Daniel Ibáñez
\orcidlink{0000-0001-6993-0001}
\and
George Konstantinidis
\orcidlink{0000-0002-3962-9303}
}

\institute{University of Southampton, University Road, Southampton, SO17 1BJ, UK \\ Contact: \email{p.pareti@soton.ac.uk}}





\authorrunning{Salas et al.}

\maketitle             

\begin{abstract}
We consider the problem of evaluating, and comparing computational policies in the Open Digital Rights Language (ODRL), which has become the de facto standard for governing the access and usage of digital resources. Although preliminary progress has been made on the formal specification of the language's features, a comprehensive formal semantics of ODRL is still missing. In this paper, we provide a simple and intuitive formal semantics for ODRL that is based on query answering. Our semantics refines previous formalisations, and is aligned with the latest published specification of the language (2.2). Building on our evaluation semantics, and motivated by data sharing scenarios, we also define and study the problem of comparing two policies, detecting equivalent, more restrictive or more permissive policies.

\keywords{ODRL \and Data Sharing \and Policy Evaluation}

\end{abstract}
\section{Introduction}

The Open Digital Rights Language (ODRL) is a W3C Recommendation that aims at providing a flexible and interoperable way to express rights and obligations related to digital content. Notably, ODRL has been adopted by developers of Data Spaces\footnote{https://digital-strategy.ec.europa.eu/en/policies/strategy-data} to regulate access to datasets, and is set to play a key role in regulating data usage within AI pipelines. 
However, to be useful in practice, policy specifications need to be implementable, enforceable, and universally understandable within the Data Spaces ecosystem. The first step in implementing any policy is deciding the semantics of the policy evaluation problem: given a usage policy over a resource, e.g., a dataset, and a \emph{state of the world} capturing the operations of actors, evaluate whether the policy is violated or satisfied in this state.

At the same time, a major function of Data Spaces is data exchange in scenarios where a data consumer wants to use data accompanied by a usage policy (written by the data provider). In this paper, we model the request of the consumer as another ODRL policy and define the problem of policy comparison: given a provider policy defining usage constraints over a resource, a usage request policy describing intended actions over that resource, decide if the request conflicts with provider policy, before the exchange takes place. 

These scenarios require a precise formal semantics for ODRL evaluation and comparison. ODRL's origins come from the area of Deontic Logic~\cite{follesdal1971deontic}, which provides the formal tools to define such a language. However, as of August 2025, the formal semantics of ODRL is still in a draft state\footnote{https://w3c.github.io/odrl/formal-semantics/}, and most importantly, it is in natural language. Only a few academic works have attempted to define semantics for ODRL \cite{pucella2006formal,steyskal2015towards,de2019odrl,bonatti2025towards}, that either consider older versions of ODRL, or are at a preliminary and partial state of formalisation. A more detailed comparison with previous work is provided in Section \ref{sec:relateds-work}. 

The problem of evaluation is arguably the most fundamental problem of computational policies, but the fact that it has not been treated in a uniform way is not surprising: ODRL has been used in a large variety of very different applications and each has its own description of operations, systems and functions that the policies should be checked against. Each such system has implemented its own ad-hoc policy evaluation , having to closely synchronise with other systems to achieve interoperability. In this paper, and in line with the recently published draft semantics, we  model a state of the world as a set of \emph{events} and implement it as a database relation. Although implementation is orthogonal (e.g., one could use an event recognition system within an organisation, or simply a triplestore with a log of events) our mathematical representation of this is a simple relation that we can query to decide our evaluation and comparison problems.  

Our main contributions are the following: (1) a novel, intuitive and simple formal semantics for ODRL, and for a simpler language that we call ODRL Lite,  based on translating policies to queries on a state of the world and (2) a semantics for policy comparison grounded on the notion of query containment and equivalence - giving rise to different comparison semantics (symmetric and asymmetric) based on the application. 

Another aspect of ODRL semantics that is not strictly defined is how to handle reasoning, which leads to many possible interpretations being possible. 
In this paper, instead of committing our semantics to a specific approach,  we chose to decouple the reasoning dimension and show how our semantics can easily be extended with different types of inferences through a separate materialisation step. This allows us to provide a core semantics for the basic case without inference, and the flexibility to adapt to different reasoning algorithms.

\section{Background}

ODRL is a policy expression language that provides a flexible and interoperable information model, vocabulary, and encoding mechanism to represent \emph{rules} about the usage of content and services. ODRL policies have four \emph{core components}: (1) \emph{Asset}: a resource that is subject of a policy; (2) \emph{Party}: an entity that undertakes functional roles as the \emph{assigner} or \emph{assignee} of a policy; (3) \emph{Action}: an operation (for example Read, Write, Copy, Anonymise) that can be exercised on a target asset and (4) \emph{Rule}: defines the \emph{permission}, \emph{prohibition}, or \emph{obligation} of the assignees of the policy to execute actions over assets. A single policy may define multiple rules on different core components.

Components of an ODRL Policy can be constrained. Constraints are boolean expressions that either dictate conditions applicable to a Rule, or refine the semantics of actions, parties or assets. In ODRL, the former usage is called \emph{Constraints}
 and the latter  \emph{Refinements}. For example, a permission rule can be constrained to a particular datetime, or to a particular location (e.g., the European Union); the Action \emph{Print} can be refined with the boolean expression \emph{with resolution less than 500dpi}; and the asset \emph{Document D} can be refined with \emph{subset of pages 1 to 10}.  

Complex ODRL policies can also make use of additional types of rules, which build on the three basic types of rules previously mentioned. Permissions can have \emph{duties}, which are types of obligations that need to be fulfilled before the assignee exercises the permission. The \emph{remedy} and \emph{consequence} rules specify remedial actions that can restore validity in a state of the world in the case of prohibition violations, or in the failure to fulfill an obligation or a duty. 

ODRL policies are usually read as \emph{\underline{Rule} from \underline{Assigner} for \underline{Assignee} to \underline{Action} \underline{Asset} subject to \underline{Constraints} and \underline{Refinements}}, \emph{e.g.} \emph{Permission from Bob for Alice to Read Document D subject to the constraint the day being Monday and Document D to be refined to its Subset of Pages 1-10}. While all rules must define the Action component, all other elements are optional. 
We assume all instances of components are drawn from a known vocabulary or Knowledge Graph.

Policies act upon a \emph{world}, e.g., a Socio-Technical System or a Data Space. A \emph{state of the world} is a formal representation of the state of the assets, parties and any other environmental information relevant for the constraints. For example, \emph{Bob Read document D, Alice Read Document D, The day is Tuesday.} An ODRL Evaluator receives as input an ODRL policy and a state of the world and decides which rules of the policy are violated or fulfilled in the state of the world. An ODRL Evaluator may work in two different scenarios or modalities: \emph{Access Control}, adds to the input an action that is \emph{requested}, the evaluator must decide if the execution of the action would create a state of the world that violates any policy; and \emph{Monitoring}, where the state of the world includes actions already performed and the evaluator decides violation/fulfillment. In this paper, we focus on the problem of violation and fulfillment of policies given a state of the world, conforming to the Monitoring scenario. Note however, that the Access Control modality can be reduced to the Monitoring one if the state of the world that would result after performing the requested action can be predicted.

\section{First-order ODRL Semantics}

An ODRL policy is evaluated against a \textit{state of the world}, which we model as a set of \emph{events}. For a state of the world $\omega$ and an event $e$ we write $\omega(e)$ when $e \in \omega$. An event is described by a finite number of features $i \in [0,n]$, corresponding to core components, constraints and refinements of the language, e.g., time, actor, action, etc. These features can also describe relationships with other events. For example, a feature can represent the number of times a similar action has been already performed, as defined by the ODRL \emph{count} constraint.
Thus an event $e$ can be defined as an n-tuple $<e_0, e_1, e_2, \ldots, e_n>$, where the value of a feature $e_i$ is either null (unspecified), a constant, or a set of constants. 
For simplicity, we assume that the first value $e_0$ of event $e$ is always the timestamp of when the action has been performed, and $e_1$ is the action. 
All other values correspond to the rest of the ODRL components, refinements and constraints an ODRL rule can have.
%
%
Each feature is unique, and its relationship with a specific ODRL component known. Two features $i$ and $j$ might both represent the  ``is located in'' relation, but the first one might represent this relation as a property (refinement) of the assignee component, while the second one as a refinement of the asset.  For any feature $i$, let $\gamma_{i}$ be the ODRL component of $i$, that is, if $i$ represents a refinement of the value $e_k$, then $\gamma_{i} = k$.  For example, if $i$ is ``has the creation date'' of the asset component, then $\gamma_i=Asset$.
When a feature captures a rule-wide property (corresponding to a constraint and not a refinement), or the timestamp $e_0$, then $\gamma_{i}$ equals a special constant $\rho$. In all other cases, that is, when a feature $i$ is a component itself, then simply $\gamma_{i} = i$.

{\setlength{\tabcolsep}{6pt}
\begin{table}[t]
   \centering
\begin{tabular}{llllll}
\toprule
Datetime & Action & Actor & Asset & Print.Resolution & Book.Pages \\
\midrule
1 & Print & Alice & Picture & 500 dpi & null \\
2 & Read & Bob & Book & null & 450 \\
3 & Print & Alice & Book & 600 dpi & 300 \\
\bottomrule
\end{tabular}
\caption{A state of the world corresponding to Example \ref{example1}}
\label{StateOfTheWorld}
\end{table}
}

\begin{example}
\label{example1}
    Consider a world with actors Alice and Bob, assets Book and Picture, and actions Read and Print. Book has the property ``number of pages'', and action Print has the property ``resolution'' that can take an integer value in the range 100-5000dpi. 
    Table \ref{StateOfTheWorld} shows the model of the events ``Actor Alice executes Action Print at 500dpi on Asset Picture at datetime 1'', ``Actor Bob executes Action Read on Asset Book with 450 pages at datetime 2'' and ``Actor Alice executes Action Print at 600 dpi on Asset Book with 300 pages at datetime 3''.
\end{example}

Rules in an ODRL policy apply only to specific events, based on restrictions on the values of the features of an event. For example, a prohibition to ``print'' can be restricted to apply only to print actions over certain resolutions. To restrict the value of a feature $i$ we write triples of the form  $<i, op, v>$, where $i$ is one of the $n$ features of an event, $v$ is a constant or set of constants, and $op$ is one of the following binary comparison operators:  \texttt{odrl:equals} ($=$), \texttt{odrl:gt} ($>$), \texttt{odrl:gteq} ($\geq$), \texttt{odrl:lt} ($<$), \texttt{odrl:lteq} ($\leq$), \texttt{odrl:neq} ($\neq$), \texttt{odrl:isA} (or \texttt{rdf:type}), \texttt{odrl:hasPart} ($\supset$), \texttt{odrl:isPartOf} ($\subset$), \texttt{odrl:isAllOf} ($\equiv$), \texttt{odrl:isAnyOf} ($\in$), and \texttt{odrl:isNoneOf} ($\notin$). A \emph{condition} $C$ is either a single triple $<i, op, v>$, known as a \emph{simple} condition, or a \emph{complex} condition which is a boolean combination of simple conditions, combined with the $\wedge$, $\vee$ and $\neg$ operators.\footnote{We omit the \texttt{odrl:andSequence} logic operator due to its lack of a clear semantic interpretation.} 
We denote with $C(e)$ the boolean result of the evaluation of condition $C$ on event $e$. Note that, ODRL does not provide negation of conditions directly but provides the exclusive-or operator $\nleftrightarrow$ that we can use to obtain negation when features are not null.\footnote{For a condition $c$, holds $\neg c \equiv c \nleftrightarrow \top$ where $\top$ is always true; $\top$ can be expressed with a disjunction of complementary conditions, such as $<i,=, x> \vee <i,\neq, x>$.}
A complex condition is evaluated in the natural way as a boolean combination of simple conditions. A simple condition $<i, op, v>$ is evaluated to true on an event $e$ if the value comparison of $e_i$ and $v$ using operator $op$ is true, and false otherwise. We ground our condition evaluation on SPARQL Filter Evaluation semantics, by defining the value comparison as true if the equivalent SPARQL filter expression would keep the bound solution. Thus evaluation errors, for example due to incomparable types, behave as evaluations to false. Condition $<i, \texttt{odrl:isA},v>$ is evaluated as $<i_c, \texttt{odrl:hasPart},\{v\}>$, where $i_c$ is the set of classes of $i$. The remaining set operators are evaluated in their natural way.
When the value of feature $i$ is unspecified (i.e., null) in the event, then any simple condition $C$ for that feature evaluates to false. 

Corresponding to ODRL rules, an \emph{event rule} $\tau$ is defined as a conjunction of conditions. In the next subsection we will see how event rules are assigned a specific type, such as permissions or prohibitions, when used within a policy.

\begin{annotated}
\begin{example}
\label{example2}
    Consider a policy with the following rules (1) ``Permission for Alice to Print Picture'', is translated as permission 
    $<$$Actor,=,Alice$$>$ $\land$
    $<$$Action,=,Print$$>$ $\land$
    $<$$Asset,=,Picture$$>$; (2) ``Prohibition for Bob to Read Book between datetimes 3 and 5 and number of pages is greater than 250'' translates as prohibition
    $<$$Actor,=,Bob$$>$ $\land$
    $<$$Action,=,Read$$>$ $\land$
    $<$$Asset,=,Book$$>$ $\land$
    ($<$$Datetime,\leq,5$$>$ $\land$ $<$$Datetime,\geq,3$$>$) $\land$
    $<$$Book.Pages,>,250$$>$ (demonstrating a complex condition on the Datetime feature); and (3) ``Obligation for Bob to Read Book before datetime 3'' is translated as obligation
    $<$$Actor,=,Bob$$>$ $\land$
    $<$$Action,=,Read$$>$ $\land$
    $<$$Asset,=,Book$$>$ $\land$
    $<$$Datetime,<,3$$>$.
\end{example}
\end{annotated}

Given a condition $C$, let $\bar{C}$ be the set of simple conditions in $C$, and $I_C$ be the set of features that appear in it, that is, the set $\{i \; | \; \exists \; op, \; v \; . \;  <i, op, v> \in \bar{C}  \}$. 
Correspondingly, given an event rule $I_\tau$, let $I_{\tau} = \bigcup_{C \in \tau} I_C$.
To align with the ODRL specification we require all our rules to be well-formed, as follows. 
An event rule $\tau$ is said to be \emph{well-formed} if:
\begin{enumerate}
    \item there is an action $a$ such that $<Action, =, a> \in \tau$; that is, each ODRL rule must specify the action $a$ it applies to,
    \item for all $i \in I_\tau$, if $\gamma_i \in \{Action, Asset, Party \}$, then $<\gamma_i, =, v> \in \tau$, and there exists no other condition $C \in \tau$ such that $\gamma_i \in I_C$
    (in other words, each ODRL core component used in a rule must be specified once, and not used in any other condition),
    \item for all complex conditions $C \in \tau$ there do not exist $i$ and $i^{\prime}$ in $I_C$ with $\gamma_i \neq \gamma_{i^{\prime}}$; in other words, each boolean combination of constraints must use features of a single ODRL component. 
\end{enumerate}
From now on, we will assume that event rules are well-formed.
We say that an event rule $\tau$ \emph{matches} an event $e$, denoted by $\mathsf{match}(\tau, e)$ if all its conditions evaluate to true on $e$, that is when $\bigwedge_{C \in \tau}C(e)$ is true.
Note that $\mathsf{match}(\tau, e)$ has a natural correspondence to the evaluation of a boolean query on tuple $e$, and we can easily implement it with SQL or SPARQL.

\begin{annotated}
\begin{example}
\label{example3}
All three rules from  Example \ref{example2} are well-formed. The rule permitting Alice to Print Picture matches the event with datetime 1 from Table \ref{StateOfTheWorld}. The same rule does not match the event with datetime 2, because, among other things, Action and Actor don't match, nor with the event with datetime 3 because the Assets don't match. 
The rule prohibiting Bob to Read the Book does not match the event with datetime 1 or 3 because either the Action and Actor don't match. The rule does not match with the event with datetime 2 either: though the event specifies $Book.Pages$ must be greater than 250, its datetime falls outside the specified range.
Finally, the rule obligating Bob to Read the Book does not match the first and third event, but it does match the second.
\end{example}
\end{annotated}

\subsection{ODRL Lite}
 
An \emph{ODRL Lite policy} is a tuple $<P,F,O>$, where $P$, $F$ and $O$ are the sets of permission, prohibition and obligation event rules, respectively.
Faithful to the deontic semantics of these concepts permissions model what is allowed in a state of the world, but not compulsory, prohibitions state what is not allowed, and obligations require certain actions to be performed to be fulfilled.
\begin{definition}
\label{def:violationlite}
We say that a state of the world $\omega$ violates a policy $<P,F,O>$ if any of the following three cases occurs: 
\begin{itemize}
    \item (\textbf{permissions}) there exists an event $e \in \omega$ and for all permission rules $\Rule \in P$,  $match(\Rule,e)$ is false; equivalently if the following is true: 
    \begin{equation}
    \label{eq:permissions}
        \exists e\ \omega(e) \wedge \bigwedge_{\tau \in P} \neg match( \tau,e)
    \end{equation}    
    \item (\textbf{prohibitions}) there exists an event $e \in \omega$ and a prohibition rule $\tau \in F$, such that  $match(\Rule,e)$ is true; equivalently if the following is true:
    \begin{equation}
    \label{eq:prohibitions}
        \exists e\ \omega(e) \wedge \bigvee_{\tau \in F} match( \tau, e)
    \end{equation}

    \item (\textbf{obligations}) for any of the obligation rules $\Rule \in O$, there does not exist an event $e \in \omega$, such that $match(\Rule,e)$ is true; equivalently:
    \begin{equation}
    \label{eq:obligations}
    \bigvee_{\tau \in O}  \nexists e\ \omega(e) \wedge match(\tau,e)
    \end{equation}

\end{itemize}
\end{definition}

Conversely, a state of the world $\omega$ is said to be \emph{valid} with respect to a policy $p=<P,F,O>$ if it does not violate it, or equivalently, the policy is \emph{satisfied} or \emph{true on $\omega$}, writing $p(\omega) = T$.
Following from Definition~\ref{def:violationlite}, to decide violations one should evaluate the following logical expression:

\begin{equation}
\label{eq:violationlite}
\begin{aligned}
    \left( \exists e\ \omega(e)\wedge 
        ( \bigwedge_{\tau \in P}\neg match(\tau,e))  
        \vee
        ( \bigvee_{\tau \in F}match(\tau, e)) \right) \\ 
        \vee \left( \bigvee_{\tau \in O}  \nexists e\ \omega(e)\wedge match(\tau,e) \right)
\end{aligned}
\end{equation}

Since the predicate $match$ is just a shorthand notation for the satisfaction of the complex conditions of a rule on an event tuple, as discussed above, expression~\ref{eq:violationlite} is just a first-order query on relation $\omega$ (the state of the world) and can be easily implemented via SQL or SPARQL. 

\begin{annotated}
\begin{example}
\label{example4}
Consider the policy $p = <{P},{F},{O}>$ where each of $P$, $F$, $O$ contain the single event rule $p_1$, $f_1$ and $o_1$, respectively, as defined in Example~\ref{example2}. 
Permission $p_1$ allows Alice to Print Picture, $f_1$ prohibits Bob to Read Book at a certain time and over a number of pages, and $o_1$ obliges Bob to Read Book before a timestamp. 
The evaluation of $p$ over the state of the world $\omega$ in Table \ref{StateOfTheWorld} is as follows: for $p_1$, the matches with events at datetime 2 and 3 return false, meaning $\omega$ violates the permissions of the policy (e.g.\ the policy does not explicitly grant permission for Bob to Read Book); for $f_1$, none of the events match as explained in Example~\ref{example3}, hence, $\omega$ is valid wrt the prohibitions of the policy; for $O_1$, there is an event where Bob Read[s] Book before datetime 3, so $\omega$ is valid wrt the obligations of the policy.
However, as permissions are violated, $\omega$ violates the whole policy.
\end{example}
\end{annotated}

One can observe that our semantics adopts the \emph{prohibited-by-default} approach, that is, everything not explicitly permitted is implicitly prohibited. Prohibitions are then primarily useful to ``carve out'' prohibited subsets of a permission. We believe this is a practical choice in several use cases, such as data sharing, which require a cautious approach to permissions. However,  one can easily simulate the alternative semantics, \emph{permitted-by-default}, by adding to all policies a universal permission that matches every event. This can be easily done with inferences over actions, by defining a single permission for a generic top-level action that all other actions are specialisations of. The only limitation of simulating this alternative semantics is that it does not allow the ``carving out'' of permitted subsets of prohibitions.

It should also be noted that our semantics is \emph{closed-world}; any event that does not appear in the state of the world is false, that is, is inferred not to have happened. Closed-world semantics are more natural for a usage policy, which by nature intends to govern the access and use of resources.

\subsection{Full ODRL}

In this section we extend ODRL Lite to cover useful ODRL features such as duties, remedies and consequences to express the core of the ODRL language.

\begin{definition}
\label{def:odrlfull}
An ODRL Lite policy $<P,F,O>$ can be extended into an \textit{ODRL Policy} $<P,F,O, DP, DPC, FR, OC>$, where:
\begin{itemize}
    \item $DP$ is a set of pairs of event rules $<$$\tau,\tau'$$>$, with $\tau, \tau' \in P$, specifying that $\tau'$ is a duty of permission $\tau$.
    \item $DPC$ is a set of triples of event rules $<$$\tau,\tau',\tau''$$>$, with $\tau, \tau', \tau'' \in P$ specifying that $\tau'$ is a duty of permission $\tau$ with consequence $\tau''$.
    \item $FR$ is a set of pairs of event rules $<$$\tau,\tau'$$>$, with $\tau \notin F$ and $\tau' \in P$ specifying that $\tau'$ is a remedy of prohibition $\tau$.
    \item $OC$ is a set of pairs of event rules $<$$\tau,\tau'$$>$, with $<Datetime, \leq, t> \in \tau$, where $t$ is a timestamp, $\tau \notin O$, and $\tau' \in P$, specifying that $\tau'$ is a consequence of not fulfilling the obligation $\tau$ by time $t$.
\end{itemize}
\end{definition}

Note that per the specification "the Party obligated to perform the duty MUST have the ability to exercise the Duty Action"\cite{ODRL22}. Thus, every duty rule $\Rule'$ in a pair $<$$(\Rule, \Rule')$$>$ in $DP$ must explicitly be in $P$ (and this sentence is guaranteed for consistent policies). We have made similar choices for the rest of these complex rules as per the specification.

Unlike ODRL Lite, this extended definition makes direct use of event timestamps, to establish the temporal ordering of events.
Intuitively, $DP$ describes duties that must have been implemented before a permission was exercised; $DPC$ extends this with consequences that must come, alongside the original duty if a permission was exercised without fulfilling its duty; $FR$ contains prohibitions and remedies which must follow the violation of the former; lastly, $OC$ dictates consequences that should occur, in addition to the obliged event, if the latter is not performed in time. 
%
Given a rule $\Rule$ and an event $e$, a time-independent match is denoted $\softmatch(\Rule,e)$ and is true if $match(\Rule \setminus \{\, \langle Datetime, \leq, t \rangle \mid t \in T \,\},e)$, where $T$ is the set of all time points.
\begin{definition}
\label{def:violationodrl}
   An ODRL Policy $<P,F,O, DP, DPC, FR, OC>$ is violated on a state of the world $\omega$ if the ODRL Lite Policy $<P,F,O>$ is violated on $\omega$, or one of the following holds:
   \begin{itemize}
       \item (\textbf{permission duties}) there exists an event $e \in \omega$ and a pair of event rules $<$$\Rule,\Rule^{\prime}$$>$ in $DP$ such that  $match(\Rule,e)$ is true and there is no event $e^{\prime} \in \omega$ with $e'_0 \leq e_0$ and $match(\Rule',e')$ true; equivalently:
       \begin{equation}
       	\label{eq:permduty}
        \exists e\ \omega(e) \wedge \bigvee_{<\Rule,\Rule'> \in DP} match(\tau,e) \wedge \nexists e'\ \omega(e') \wedge match(\tau',e') \wedge (e'_0 \leq e_0)
         \end{equation} 
         \item (\textbf{permission duties with consequences}) 
         there exists an event $e \in \omega$ and a triple of rules $<$$\Rule,\Rule^{\prime},\Rule^{\prime\prime}$$>$ in $DPC$ such that  $match(\Rule,e)$ is true  (permission exercised), and (a) there is no prior (duty) event $e' \in \omega$ with $match(\Rule',e')$ and $e'_0 \leq e_0$ and (b) there is no subsequent duty event  $e' \in \omega$ with $match(\Rule',e')$ and $e_0 \leq e'_0$, or, there is no subsequent consequence event $e'' \in \omega$ with $match(\Rule'',e'')$ and $e_0 \leq e''_0$;
         equivalently:
\begin{equation}
\small
\label{eq:permdutycon}
\exists e\ \omega(e) \wedge \underset{<\Rule,\Rule', \Rule''> \in DPC}{\bigvee} match(\Rule,e)  \wedge \left(
    \begin{array}{@{}c@{}}
         \nexists e'\ \omega(e') \wedge  match(\Rule',e') \wedge (e'_0 \leq e_0)\ \wedge   \\
        \left(
            \begin{array}{c}
            \nexists e'\ \omega(e') \wedge match(\Rule',e') \wedge (e_0 \leq e'_0) \\
            \vee \\
            \nexists e''\ \omega(e'') \wedge match(\Rule'',e'') \wedge (e_0 \leq e''_0)
            \end{array}
        \right)
    \end{array}
\right)
\end{equation}

        \item (\textbf{prohibition remedies}) there exists an event $e \in \omega$ and a pair of event rules $<$$\Rule,\Rule^{\prime}$$>$ in $FR$ with $match(\Rule,e)$ and there is no event $e' \in \omega$ with  $match(\Rule',e')$ and $e'_0 \geq e_0$; equivalently: 
        \begin{equation}
        \label{eq:prohibremedy}
        \exists e\ \omega(e) \wedge \bigvee_{<\Rule,\Rule'> \in FR} match(\tau,e) \wedge \nexists e'\ \omega(e') \wedge match(\tau',e') \wedge (e'_0 \geq e_0)
         \end{equation}
         \item (\textbf{obligation consequences}) for any pair of event rules $<$$\Rule,\Rule^{\prime}$$>$ in $OC$,
         such that $<$$Datetime, \leq, t$$> \in \tau$ there is no event $e \in \omega$ such that $match(\Rule,e)$ (obligation not fulfilled)  and there is no pair of events $e', e'' \in \omega$ such that $\softmatch(\Rule,e')$ (obligation implemented late), $match(\Rule',e'')$ and $e''_0 \geq t$ (consequence implemented); equivalently:
        \begin{align}
        \label{eq:obcon}
        \bigvee_{<\Rule,\Rule'> \in OC, <0, \leq, t> \in \Rule}\left(
            \begin{tabular}[t]{@{}c@{}}
         	 $  \big(\nexists e. \ \omega(e) \wedge match(\Rule, e)\big) \wedge \big(\nexists e', e'' .\  \omega(e') \wedge \omega(e'')$ \\         	 
         	 $  \wedge\ \softmatch(\Rule, e') \wedge match(\Rule', e'') \wedge (e''_0 \geq t) \big)$ \\
        	\end{tabular}
            \right)
        \end{align}
   \end{itemize}
\end{definition}

Again, all elements of Definition~\ref{def:violationodrl} are first-order queries and thus detecting violations of ODRL policies can be done via query answering.

\begin{annotated}
\begin{example}
\label{example5}
    Consider the following Policy : ``Permission for Alice to Print Book provided Bob has Read Book''. We translate this as a pair of event rules $<\tau,\tau'>$ where the permission component is $<Actor,=,Alice> \land
    <Action,=,Print> 
  \land <Asset,=,Book> $ and the duty component is translated as 
  $<Actor,=,Bob> \land
    <Action,=,Read> 
  \land <Asset,=,Book> $.
  In this case, each individual component matches an event in Table \ref{example1}-- the event with datetime 1 for the permission component, and the event with datetime 2 for the duty component. However the datetime feature of the duty component is greater than that of the permission component, indicating that the permitted event occurred before the duty was carried out. Thus, the state of the world violates the policy.
\end{example}
\end{annotated}

Note that equation~\ref{eq:permduty} does not examine if there is a permission for all events, as this is already performed in equation \ref{eq:permissions}. 
This is why any event detected here (and equally for should also be explicitly permitted by equation \ref{eq:permissions}; this is required by Definition~\ref{def:odrlfull} which makes sure that for all pairs $<$$(\Rule, \Rule')$$>$ in $DP$, rule $\Rule$ also is in $P$. The same is true for permission rules in $DPC$. This is not true however for prohibitions in $FR$, i.e., if a for a pair  $<$$(\Rule, \Rule')$$>$ a rule $\Rule$ is also in $F$ then equation~\ref{eq:prohibitions} will capture a violation independent of remedies, a case we want to avoid as it would make the remedy meaningless.
Similarly, an obligation rule $\Rule$ that exists in a pair $<$$(\Rule, \Rule')$$>$ in $OC$ should not exist in $O$ since the absence of an obliged event in $O$, inside the ``deadline'', would trigger a violation by equation~\ref{eq:obligations}, regardless of whether the obligation and its consequence were later fulfilled.

\subsection{Incorporating Reasoning}
Instances of ODRL components may come from an ontology that defines relationships between them that are relevant to the problem of policy evaluation. For example,  W3C's ODRL Vocabulary\footnote{\url{https://www.w3.org/TR/odrl-vocab}}  defines the Actions \textit{Use} and \textit{Transfer} as the top level types, with more specific instances that encompass their operational semantics specified with the \textit{included in} relation. For example the operational semantics of action \textit{Display} (e.g. an image) is included in the operational semantics of action \textit{Play} (e.g. a video). A permission to perform an action, such as \textit{Play}, naturally also implies the permission to perform its sub-actions, such as \textit{Display}. Our semantics can easily be extended to capture this type of reasoning, by considering extended policies where all such implied permissions are materialized. Other types of reasoning can equally be dealt with similar materialisation steps, or through extensions of our definitions of query evaluation, containment and equivalence to include reasoning dependencies.  For example, conjunctive query containment under dependencies~\cite{10.1145/588111.588138} can be answered either by (a) materialising new event rules, by the use of forward-chaining, algorithms that use the dependencies, resulting in saturated policies which can then be checked for containment; or (b) directly rewriting the query form of the initial policy into a new policy and then checking for containment~\cite{calvanese2007tractable}. 

We do not study other rules of inference in this paper, as no obvious interpretation of them can be extracted from the ODRL specification. For example, it is debatable whether a prohibition to perform the \textit{Play} action should also imply a prohibition to perform the \textit{Display} action. Conversely, a prohibition to \textit{Display} might also imply a prohibition to \textit{Play}. How reasoning should be dealt with is also unclear for obligations or other ODRL components. 
    
\section{Policy Comparison Semantics}
\label{sec:policycomparison}

In this section we model policy comparison using query containment and equivalence. In particular, query containment models policy implication, intuitively stating when a policy is broader or stricter than another policy. On the other hand, query equivalence models policies that are true on exactly the same states of the world, that is, they are not conflicting. A query $q$ is \emph{contained} in a query $q'$ ($q \sqsubseteq q'$) if for all database instances $D$, the result of $q$ on $D$, denoted $q(D)$ is included in the result of $q'$ on the same $D$, that is $q(D) \subseteq q'(D)$. When $q \sqsubseteq q'$ and $q' \sqsubseteq q$ we say that the two queries are equivalent and write $q \equiv q'$.

Given ODRL policies $p$ and $p'$, $p$ is contained in $p'$, denoted $p \sqsubseteq p'$, if: for any state of the world $\omega$, whenever $p(\omega)$ is true then $p'(\omega)$ is true.
Thus by translating the policies $p$ and $p'$ to queries, $q_{p}$, $q_{p'}$ correspondingly, as dictated by Def~\ref{def:violationodrl}, it holds that $p \sqsubseteq p'$ iff $q_{p} \sqsubseteq q_{p'}$. If $p \sqsubseteq p'$ and $p' \sqsubseteq p$ then we say that  $p$ is equivalent to $p'$, denoted $p \equiv p'$.

In the general case, any semantic difference between two policies could be considered a cause for conflict, such as in a policy negotiation scenario when each party is not willing to compromise on their chosen set of policy rules. 
We call this a \emph{symmetric conflict}. When a symmetric conflict exists between policies $p$ and $p'$, then it also exists between $p'$ and $p$.

\begin{definition}
    Given policies $p$ and $p'$, we say that there is a \emph{symmetric conflict} between $p$ and $p'$ if $p \not\equiv p'$.
\end{definition}

Often, it is useful to determine whether a policy is strictly more/less permissive than another one. For example, in a data sharing scenario, no conflict arises if the policy proposed by a data requester is stricter than the policy offered by the data provider.
This relaxation leads to the notion of asymmetric conflicts. 

\begin{definition}
    Given a requester policy $p^r$ and a provider policy $p^o$, we say that there is an \emph{asymmetric conflict} between them, if $p^r \not\sqsubseteq p^o$.
\end{definition}

\subsection{Comparison of ODRL Lite policies}\label{subsec:conflict}

We now provide an approach, alternative to query containment, to detect asymmetric conflicts (and thus also symmetric ones) among ODRL Lite policies that is simpler to implement.  
As evidence of this, we developed an implementation\footnote{\url{https://github.com/paolo7/ODRL2SHACL}} which is currently used in the UPCAST project \cite{yumusak2024data} to facilitate policy negotiation in Data Marketplaces. 
This approach works by directly comparing the rules of two policies against each other. For this, we are going to use the notions of \emph{event rule containment} and \emph{event rule overlap}.

\begin{definition}
\label{def:rulecont}
     For all event rules $\Rule$, $\Rule'$, we say that $\Rule$ is contained in $\Rule'$, denoted $\Rule \sqsubseteq \Rule'$ if for all events $e$, $match(\Rule, e)$ implies $match(\Rule', e)$.    
\end{definition}

\begin{definition}
\label{def:ruleoverlap}
     For all event rules $\Rule$, $\Rule'$, we say that $\Rule$ overlaps with $\Rule'$, denoted $\Rule \sqcap \Rule'$ if there exists an event $e$ such that $match(\Rule, e) \wedge match(\Rule', e)$.
\end{definition}

These definitions are trivially lifted to sets of event rules on either side of the containment and overlap operator. This approach also assumes that the policies are in a \emph{consistent} normal form, that is, when there is no overlap between prohibitions and permissions/obligations, and every event that matches an obligation also matches a permission. While the existing implementation assumes that policies are already in this format, any ODRL Lite policy can be made consistent by removing from a policy the following redundant elements: (1) every part of a permission and obligation that is explicitly forbidden, and (2) every part of an obligation that is not permitted. Note how these elements are redundant, and thus their removal does not change the meaning of the policy. Also note how prohibitions can be removed too, as they are redundant in a consistent policy.

\begin{definition}
\label{def:ruleconsist}
     An ODRL Lite policy $p = <$$P,F,O$$>$ is \emph{consistent} if $P \nsqcap F$, $O \nsqcap F$, and $O \sqsubseteq P$.   
\end{definition}

\begin{theorem}
\label{theorem:conflict}
    Given a consistent ODRL Lite requester policy $p = <$$P,F,O$$>$, and a consistent ODRL Lite provider policy $p' = <$$P',F',O'$$>$, $p \not\sqsubseteq p'$ iff:
\begin{enumerate}
    \item The requester's permissions are not contained in the provider's permissions, that is,  $P \not\sqsubseteq P'$ is true, or
    \item The requester is not explicitly agreeing to every obligation set by the provider, that is, $\exists \tau' \in O', \nexists \tau \in O \;   (\tau \sqsubseteq \tau')$

\end{enumerate}  
\end{theorem}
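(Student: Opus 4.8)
The plan is to prove the two directions of the biconditional separately, after recording one simplification that the consistency hypothesis buys us. By Definition~\ref{def:violationlite}, a state of the world $\omega$ satisfies $p=<P,F,O>$ precisely when (i) every $e\in\omega$ matches some rule in $P$, (ii) no $e\in\omega$ matches any rule in $F$, and (iii) every obligation rule in $O$ is matched by some event of $\omega$. The first thing I would observe is that for a \emph{consistent} policy, (i) already forces (ii): if every event of $\omega$ matches some rule in $P$, then, since $P\nsqcap F$, no event of $\omega$ can match a rule in $F$. So for a consistent policy, $\omega$ is valid iff (i) and (iii) hold --- prohibitions drop out of the criterion and survive only through the consistency assumption. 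I would record this reformulation for both $p$ and $p'=<P',F',O'>$ at the outset and use it throughout. I also adopt the mild standing assumption that every obligation rule is satisfiable, so that each consistent policy admits at least one valid state of the world; an unsatisfiable obligation makes its policy vacuous and is a degenerate case I would set aside.

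For the ($\Rightarrow$) direction, suppose $p\not\sqsubseteq p'$, witnessed by an $\omega$ with $p(\omega)=T$ and $p'(\omega)=F$. Since $\omega$ violates $p'$, one of the three clauses of Definition~\ref{def:violationlite} applies to $p'$, and I would argue by cases. If the permission clause~(\ref{eq:permissions}) fires for $p'$, then some $e\in\omega$ matches no rule in $P'$; but $p(\omega)=T$ gives that $e$ matches some rule in $P$, so $P\not\sqsubseteq P'$ and condition~1 holds. If the prohibition clause~(\ref{eq:prohibitions}) fires, then some $e\in\omega$ matches a rule $\tau'\in F'$; again $e$ matches some rule in $P$, and if we had $P\sqsubseteq P'$ then $e$ would also match some rule in $P'$, making $e$ a witness for $P'\sqcap F'$ and contradicting the consistency of $p'$ --- so once more $P\not\sqsubseteq P'$ and condition~1 holds. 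Finally, if the obligation clause~(\ref{eq:obligations}) fires, then some $\tau'\in O'$ is matched by no event of $\omega$; were condition~2 false, there would be a $\tau\in O$ with $\tau\sqsubseteq\tau'$, and since $p(\omega)=T$ some $e\in\omega$ matches $\tau$ and hence also $\tau'$, a contradiction --- so condition~2 holds. In every case condition~1 or~2 is true.

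For the ($\Leftarrow$) direction I would, given condition~1 or~2, construct a state of the world witnessing $p\not\sqsubseteq p'$. Fix a valid state $\omega_0$ for $p$ (it exists by the standing assumption); by the reformulation, $\omega_0$ matches every rule of $O$ and each of its events matches some rule of $P$. If condition~1 holds, choose $e^{*}$ matching some rule of $P$ but no rule of $P'$ and take $\omega=\omega_0\cup\{e^{*}\}$: the reformulation still gives $p(\omega)=T$ (the new event matches $P$, the obligation part is unchanged), while $e^{*}$ triggers the permission clause~(\ref{eq:permissions}) for $p'$, so $p'(\omega)=F$. If instead condition~2 holds, choose $\tau'\in O'$ with no $\tau\in O$ satisfying $\tau\sqsubseteq\tau'$; for each $\tau\in O$ the failure $\tau\not\sqsubseteq\tau'$ supplies an event $e_\tau$ with $match(\tau,e_\tau)$ and $\neg\,match(\tau',e_\tau)$, and I take $\omega=\{e_\tau : \tau\in O\}$ (or $\omega=\emptyset$ if $O=\emptyset$). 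Each $e_\tau$ matches a rule of $O$ and hence, by $O\sqsubseteq P$, some rule of $P$, so the reformulation gives $p(\omega)=T$; but no event of $\omega$ matches $\tau'\in O'$, so the obligation clause~(\ref{eq:obligations}) for $p'$ fires and $p'(\omega)=F$.

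The step I expect to be the crux is the ($\Leftarrow$) construction: one must check that the hand-built $\omega$ genuinely satisfies $p$, and this is exactly where the two substantive parts of Definition~\ref{def:ruleconsist} enter --- $O\sqsubseteq P$ to see that the obligation witnesses are permitted, and $P\nsqcap F$ (through the reformulation) to see that no prohibition is inadvertently triggered. The other point needing care is the prohibition subcase of ($\Rightarrow$), where it is the consistency of the \emph{provider} policy, $P'\nsqcap F'$, that converts a prohibition violation into a purely permission-level statement about $P$ and $P'$. Degenerate inputs --- an empty $O$ (handled by $\omega=\emptyset$) and a policy with an unsatisfiable obligation (excluded by our standing assumption) --- pose no real difficulty.
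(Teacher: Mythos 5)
Your proof is correct and follows essentially the same route as the paper's: a case analysis over which violation clause of Definition~\ref{def:violationlite} fires for $p'$ in one direction, and an explicit construction of a witness state of the world (obligation witnesses plus, for condition~1, an extra event permitted by $P$ but not $P'$) in the other. Your treatment is in fact slightly more careful in two spots the paper glosses over --- you make explicit why the prohibition clause reduces to the permission case via $P'\nsqcap F'$, and you state the implicit satisfiability assumption on obligation rules --- but these are refinements of the same argument, not a different one.
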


\begin{proof}

$\Leftarrow$ If $p \not\sqsubseteq p'$, then there must exist a state of the world $\omega$ where $p$ is valid but $p'$ is not. Since prohibitions are redundant in consistent policies, there must exist a violation for $p'$ in $\omega$ from the permission or obligation equation of Definition \ref{def:violationlite}. If the violation is caused by the \textbf{permission equation}, then there exists an event $e \in \omega$ that matches a permission in $P$ but no permissions in $P'$. Thus $P \not\sqsubseteq P'$. If the violation is caused by the \textbf{obligation equation}, then there exist an obligation $\tau' \in O'$ such that no event $e$ exists in $\omega$, such that $match(\tau', e)$ is true. Thus, there cannot exist a $\tau \in O$ such that $\tau \sqsubseteq \tau'$, as this would imply the existence of an $e$ in $\omega$ that matches $\tau$ and, by inclusion, $\tau'$.

$\Rightarrow$ If either condition (1) or condition (2) of the theorem are true, then we can construct a state of the world $\omega$ where $p$ is valid but $p'$ is not (thus proving $p \not\sqsubseteq p'$) as follows. If \textbf{the second condition of the theorem is true}, pick an obligation $\tau' \in O'$ such that $\nexists \tau \in O \   (\tau \sqsubseteq \tau')$. Construct $\omega$ from an empty set by adding an event $e$ to $\omega$ for each $\tau \in O$, such that $match(\tau,e)$ and $\neg match(\tau',e)$ (note that, by the premises, such element must exist). It is easy to see that $\omega$ satisfies $p$ but not $p'$, as it violates obligation $\tau'$. 
If \textbf{the second condition of the theorem is false, but the first is true}, construct $\omega$ by adding an event $e$ to $\omega$ for each $\tau \in O$ (thus every obligation of $O$ is satisfied, and so naturally also every obligation in $O'$, as every obligation in $O'$ is satisfied by the same events that satisfy the obligations from $O$ that they contain), then pick one permission $\tau \in P$ such that $\{\tau\} \not\sqsubseteq P'$, and one event $e^{*}$ such that $match(\tau,e^{*})$ and not exists a $\tau'\in P'$  such that $match(\tau,e^{*})$. Such element must exist if $P \not\sqsubseteq P'$. After adding $e^{*}$ to $\omega$, $\omega$ still satisfies $p$ but not longer $p'$.

\end{proof}

\section{Related Work} \label{sec:relateds-work}

We will now present a brief overview of the evolution of ODRL and its formalisations, and how they compare with the work in this paper.  
Iannella's work in \cite{iannella2004open} 
and \cite{iannella2007open} 
presented ODRL’s early development and use cases, as well as introduced its XML structure and application to Creative Commons licensing. These studies also discussed the challenges of semantic mapping and policy interpretation. The Open Mobile Alliance and multimedia scenarios provided additional grounding in practical implementation contexts. Serr\~{a}o et al. \cite{serrao2005using} demonstrated the role of ODRL in the OpenSDRM architecture for managing streaming media and sensor data, \cite{zhang2008research} offered a broader review of ODRL’s evolution and its comparative position among rights expression languages (RELs), while \cite{kebede2018critical} highlighted the challenges in using ODRL for real-world data-sharing infrastructures.

The problem of providing an ODRL formal semantics has been tackled since the inception of the language. Pucella and Weissman defined a formal semantics of ODRL 1.1 for the problem of defining when a permission (or prohibition) follows from a set of ODRL statements \cite{pucella2006formal}. At the time, the link with RDF had not been established; ODRL was XML based and lacked a proper way to specify instances of Parties and Assets. To the best of our knowledge, the refinement of XML Schemas into RDF and ontologies was first proposed by \cite{garcia2005formalising}, while \cite{kasten2010making} proposed the first OWL modelling of ODRL, before the release of ODRL 2.0.

Steyskal and Polleres \cite{steyskal2015towards} proposed semantics for ODRL 2.1 supporting reasoning encoded in the functions that matched requests against rules in a policy. Requests contain actions, 
parties, assets and constraints. The requested actions are matched against rule actions, and return one of six values depending on the relationship between actions (\textit{match}, \textit{no match}, \textit{broader}, \textit{narrower}, \textit{part of}, \textit{required}). 
Constraints are evaluated as boolean functions, while parties and assets are matched exactly. 
Based on these matches a policy is found to be \textit{permitted}, \textit{conditionally permitted}, \textit{conditionally prohibited}, \textit{not active} or \textit{not applicable}.

De Vos et al. \cite{de2019odrl} defined the ODRL Regulatory Compliance Profile (ORCP), a custom ODRL profile, and semantics based on answer set programming where processes (series of events, in our case) are modelled as model states, and ODRL policies, which are in RDF format, are translated into \textit{fluents} --facts that are true if present and false otherwise -- following the closed-world assumption. The model and fluents would be compiled into an answer set program and solved to check for compliance between policies and regulatory frameworks. All mentions of reasoning under dependencies are left as future work, but still discussed briefly. Our approach instead saturates policies by materialising new event rules according to dependencies specified in a given ontology. 

The work in \cite{cimmino2025open} proposes a novel approach to express ODRL policies that integrates the descriptive ontology terms of ODRL with other languages that allow dynamic data handling or function evaluation. Our evaluation problem can then be cast as the execution of the program in the target language. The framework proposes abstract functions \emph{transformConstraints} and \emph{transformActions} to translate ODRL to a target programming language and provide implementations for Python and Java. They do not propose a formal semantics of the transformation functions, highlighting instead flexibility and practicality.

Recently \cite{slabbinck2025interoperable} introduced a Compliance Report Model and test suite to support systematic analysis of policy engines. They also implement an ODRL Evaluator based on the current ODRL formal semantics draft with reasoning capabilities implemented by using the EYE reasoner. However, their evaluator is incomplete due to the incompleteness of the W3C draft.

We now provide a more detailed comparison with the formalisation of ODRL semantics by Bonatti et al. \cite{bonatti2025towards}. While their work is an early stage of development, with certain components being left for future work, 
it is the most comprehensive formalisation of ODRL 2.2 to date, which builds and expands on previous work. 
They provide a declarative semantics for ODRL which focuses on a variant of the evaluation problem, where policies are used to determine compliance over traces (i.e. sequence of states). 
A core difference between our semantics and the one presented by Bonatti et al. is that ours is based on query answering and it is (1) our semantics uses a single domain of constants by abstracting away from identifiers (e.g. when two or more syntactical expression refer to the same entity); (2) it deals with inferences (such as dependencies between actions), as a separate materialisation step; (3) it does not require a distinction between ``nontemporal'' and ``temporal'' left operands, and between ``active'' or ``inactive'' rules.

At a semantic level, our formalisation differs from the one in Bonatti et al. in the following ways. In \cite{bonatti2025towards}, obligations need to be fulfilled by all possible assignees, while in ours only by one. Our choice simplifies computing compliance with obligations, as determining a complete list of all possible assignees might not always be trivial or even possible. Our semantics also provides an interpretation of obligation with consequences, while in \cite{bonatti2025towards} it is not specified under which conditions an obligation can be considered fulfilled in time (thus not requiring consequences), and when instead consequences are required. Moreover, in \cite{bonatti2025towards}, events that fulfil obligations are automatically permitted. However this is not formalised, and if any one of multiple events fulfils a single obligation, it is unclear if all of them are automatically permitted. Our semantics is more expressive, as it can also capture the case in which some events that could fulfil an obligation are not directly permitted, but might require duties or incur consequences. We also provide an interpretation of constraint operators which avoids redundancy and better aligns with the specification (e.g. differentiating between \texttt{isA}, \texttt{isPartOf} and \texttt{isAnyOf}). Lastly, we extend the study of policy comparison with an asymmetric case and motivate it in a practical scenario.

\section{Conclusion}

In this paper, we proposed a comprehensive formal semantics for the latest ODRL specification (2.2). 
Our semantics directly addresses both the problem of policy evaluation (i.e.\ whether a policy is satisfied in a state of the world) and the problem of policy comparison. For the latter problem, we define both a \emph{strict} symmetric comparison and a \emph{softer} asymmetric one. We discussed the need for both comparison approaches  in the context of  data usage policies in data marketplaces and Data Spaces, providing an answer to the increasing need for data usage regulation in the age of AI.
Our policy comparison semantics is intuitive and easy to interpret, and, due to its grounding in query containment, is easy to implement in any standard query language such as SQL or SPARQL. In particular, we provided a streamlined approach to detect both types of policy comparisons for a core subset of ODRL policies, proved its correctness and provided a reference implementation. Future work will focus on a theoretical analysis of our semantics, and extending its implementation with the aim of making it more usable for the community.

\bibliographystyle{splncs04}
\bibliography{bibliography}

\end{document}